	\pgfplotsset{compat=1.5} 
	\newtheorem{theorem}{Theorem}
	\newtheorem{definition}{Definition}
	\renewcommand{\vec}[1]{\boldsymbol{#1}} 
	\newcommand{\dist}{\operatorname{Dist}}	
	\definecolor{lstBlue}{RGB}{11,72,107}
	\definecolor{lstRed}{RGB}{254,67,101}
	\definecolor{lstGreen}{RGB}{81,149,72}
	\definecolor{lstPink}{RGB}{252,157,154}
\bfseries\color{black},
\ttfamily\color{black},
	\lstdefinelanguage{Prism}{ 
        keywords= {bool, C, ceil, const, ctmc, double, dtmc, endinit, endmodule, endrewards, endsystem, F, false, floor, formula, G, global, I, init, int, label, max, mdp, min, module, nondeterministic, P, Pmin, Pmax, prob, probabilistic, R, rate, rewards, Rmin, Rmax, S, stochastic, system, true, U, X}, 
        comment=[l] {//}, morecomment=[s]{/*}{*/}, 
        captionpos=b, 
	}
	\lstdefinelanguage{sEnglish}{
		keywords={., +, -, ~, ^, If, while, then, end, and, or, not},
		keywords=[2]{INITIAL, BELIEFS, AND, GOALS, ACTIONS, PERCEPTION, PROCESS, REASONING, EXECUTABLE, PLANS},
    	morecomment=[l]{//},
    	moredelim=[s][\color{black}]{\{}{\}},
		moredelim=[s][\color{black}]{^[}{]},
		moredelim=[s][\color{black}]{[}{]},
    	morestring=[b]",
    	morestring=[b]',
    }
    \lstdefinelanguage{mol}{
    	morecomment=[l][\color{black}]{>},
    	moredelim=[s][\color{black}]{@}{:},
    }
	\newacronym{ail}{AIL}{Agent Infrastructure Layer}
	\newacronym{ajpf}{AJPF}{Agent Java PathFinder}
    \newacronym{agv}{AGV}{Autonomous Ground Vehicle}
    \newacronym{aop}{AOP}{Agent Oriented Programming}
    \newacronym{asv}{ASV}{Autonomous Surface Vehicle}
    \newacronym{auv}{AUV}{Autonomous Underwater Vehicle}
    \newacronym{bdi}{BDI}{Belief-Desire-Intention}
    \newacronym{bdd}{BDD}{Binary Decision Diagram}    
    \newacronym{cat}{CAT}{Cognitive Agent Toolbox}
    \newacronym{cdt}{CDT}{Constrained Delaunay Triangulation}
    \newacronym{cog}{COG}{Centre Of Gravity}
    \newacronym{ctl}{CTL}{Computation Tree Logic}
    \newacronym{dof}{DOF}{Degrees Of Freedom}
    \newacronym{dtmc}{DTMC}{Discrete-Time Markov Chain}
    \newacronym{ekf}{EKF}{Extended Kalman Filter}
    \newacronym{gps}{GPS}{Global Positioning System}
    \newacronym{gui}{GUI}{Graphical User Interface}
    \newacronym{hs}{HS}{Hybrid System}
    \newacronym{imu}{IMU}{Inertial Measurement Unit}
    \newacronym{ivp}{IvP}{Interval Programming}
    \newacronym{jpf}{JPF}{Java PathFinder}
    \newacronym{ltl}{LTL}{Linear Temporal Logic}  
    \newacronym{lisa}{LISA}{Limited Instruction Set Agent}
    \newacronym{lct}{LCT}{Local Clearance Triangulation}
    \newacronym{mcmas}{MCMAS}{Model Checker for Multi-Agent Systems}
    \newacronym{mdp}{MDP}{Markov Decision Process}
    \newacronym{mol}{MOL}{Machine Ontology Language}
    \newacronym{moos}{MOOS}{Mission Oriented Operating Suite}
    \newacronym{mtbdd}{MTBDD}{Multi-Terminal Binary Decision Diagram}
    \newacronym{nasa}{NASA}{National Aeronautics and Space Administration}
    \newacronym{nlp}{NLP}{Natural Language Programming}
    \newacronym{oop}{OOP}{Object Oriented Programming}
    \newacronym{pact}{PACT}{Pilot Authority and Control of Tasks}
    \newacronym{pctl}{PCTL}{Probabilistic Computation Tree Logic}
    \newacronym{prs}{PRS}{Procedural Reasoning System}
    \newacronym{pta}{PTA}{Probabilistic Timed Automata}
    \newacronym{rrt}{RRT}{Rapidly-exploring Random Tree}
    \newacronym{sis}{SIS}{Sequential Importance Sampling}
    \newacronym{slam}{SLAM}{Simultaneous Localization And Mapping}
    \newacronym{smcp}{SMCP}{Standard Marine Communication Phrases}
    \newacronym{ukf}{UKF}{Unscented Kalman Filter}
    \newacronym{usv}{USV}{Unmanned Surface Vehicle}
    \newacronym{uav}{UAV}{Unmanned Aerial Vehicle}
    \newacronym{uuv}{UUV}{Unmanned Underwater Vehicle}
    \newacronym{vr}{VR}{Virtual Reality}
\begin{document}

\title{
A stochastically verifiable autonomous control architecture with reasoning
}

\author{Paolo Izzo, Hongyang Qu and Sandor M. Veres}
\affil{Department of Automatic Control and Systems Engineering\\ University of Sheffield,
Sheffield 
S1 3JD, UK \\\{pizzo1, h.qu, s.veres\}@sheffield.ac.uk}

\maketitle
\thispagestyle{empty}
\pagestyle{empty}

\begin{abstract}

A new agent architecture called Limited Instruction Set Agent (LISA) is introduced for autonomous control. The new architecture is based on previous implementations of AgentSpeak and it is structurally simpler than its predecessors with the aim of facilitating design-time and run-time verification methods. The process of abstracting the LISA system to two different types of discrete probabilistic models (DTMC and MDP) is investigated and illustrated. The LISA system provides a tool for complete modelling of the agent and the environment for probabilistic verification. The agent program can be automatically compiled into a \gls{dtmc} or a \gls{mdp} model for verification with \textsc{Prism}. The automatically generated \textsc{Prism} model can be used for both design-time and run-time verification. The run-time verification is investigated and illustrated in the LISA system as an internal modelling mechanism for prediction of future outcomes.

\end{abstract}

\section{Introduction}

Autonomous control is an area within control sciences that emerged by upgrading  classical \emph{feedback control} by decision making on what control references to use. The purpose of feedback control is to regulate a system in order to make it follow a predefined reference input. Autonomous controllers are designed to make decisions \emph{what} reference signal to use and, more generally, what goals to achieve and \emph{how} to achieve them. They do so by generating and executing plans of action that work toward goals \cite{astrom1992,astrom2010}. Autonomous controllers aim to introduce a certain level of ``intelligence'' in control systems, that is the ability of a system to act appropriately in an uncertain environment\cite{meystel2002}.

A first attempt towards autonomous decision-making software was initially made by using \gls{oop}.  
However the passive nature of objects in \gls{oop}, led to the development of active objects called ``agents'' \cite{veres2011}, which implement decision-making processes. A formal description of autonomous agents can be found in \cite{veres2011,wooldridge2002,wooldridge1995}. 

One of the most widely used ``anthropomorphic'' approaches to the implementation of autonomous agents is the \gls{bdi} architecture \cite{bordini2007,veres2011}. \gls{bdi} agent architectures are characterised by three large sets of atomic predicates: \emph{Beliefs}, \emph{Desires} and \emph{Intentions}. The most known implementations of the \gls{bdi} architecture are the \gls{prs} \cite{georgeff1986,georgeff1987} and \emph{AgentSpeak} \cite{rao1996}. AgentSpeak fully embraces the philosophy of \gls{aop} \cite{shoham1993}, and it offers a customisable Java based interpreter.

Autonomous agents have a considerable potential for implementation in all sorts of different applications. However their introduction in real-world scenarios brings along safety concerns, creating the need for \emph{model checking} \cite{clarke1999}. 
An early attempt to \gls{bdi} agent verification can be found in \cite{bordini2003,bordini2006}, where the authors present a translation software from \emph{AgentSpeak} to either Promela or Java, and then use the associated model checkers Spin \cite{holzmann1991,holzmann1997} and \gls{jpf} \cite{visser2003}. 
A subsequent effort towards verifiable agents was made by Dennis \emph{et al.} \cite{dennis2008} with a \gls{bdi} agent programming language called \emph{Gwendolen}, which is implemented in the \gls{ail} \cite{dennis2008b,dennis2008c}, a collection of Java classes intended for use in model checking agent programs, particularly with \gls{jpf}. An evolution of \gls{jpf} is \gls{ajpf} \cite{dennis2012}, specifically designed to verify agent programs.
However \gls{jpf} and \gls{ajpf} introduce a significant bottleneck in the workflow as the internal generation of the program model, which is created by executing all possible paths, is highly computationally expensive. In \cite{hunter2013} it is suggested to alleviate this problem by using \gls{jpf} to generate models of agent programs that can be executed in other model-checkers. This idea is further developed in \cite{dennis2015}, which shows how \gls{ajpf} can be modified to output models in the input languages of Spin or \textsc{Prism} \cite{kwiatkowska2011}, a probabilistic model checker.  
All of the approaches towards agent verification to date, do not provide the user with a complete framework to build and verify a probabilistic model, and mostly they do not perform at a level suitable for real-time applications.


In this paper we introduce a new agent architecture called \gls{lisa}. The architecture of \gls{lisa} is based on the three-layer architecture \cite{gat1998} and the agent program is an evolution of Jason \cite{bordini2007,jasonmanual}.
The aim is to simplify the structure and the execution of the agent program in order to reduce the size of the state-space required to abstract it and ultimately allow for a fast verification process.
The agent program is developed and described with \emph{sEnglish} \cite{lincoln2013,veres2008}, a natural language programming interface. The use of sEnglish provides a way to define both the agent program and the environment model in an intuitive, natural-language document. 
The document will then be automatically translated into \textsc{Prism} source code for verification by probabilistic model checking. This is done by first proving that \gls{lisa} can be abstracted as a \gls{dtmc} or a \gls{mdp}, based on design choices made by the user.
We also propose the use of probabilistic model checking in \textsc{Prism} to improve the non-deterministic decision making capabilities of the agent in a run-time verification process. Using run-time verification the agent is able to look into the consequences of its own choices by running model checking queries through the previously generated \textsc{Prism} model.


\section{Background}
\label{sec:background}

\subsection{Rational Agents}

An agent-based system is characterised by its \emph{architecture}, a description of how the agent reasoning communicates with lower abstraction subsystems and ultimately with the environment.
By analogy to previous definitions \cite{lincoln2013,wooldridge2002,veres2011}, we define the agent reasoning as follows.
\begin{definition}[\textbf{Rational agent}]
\label{def:agent}
A \emph{rational \gls{bdi} agent} is defined as a tuple 
\begin{equation*}
\label{eq:agent}
\mathcal{R}=\{ \mathcal{F},B,B_0,L,A,A_0,\Pi\}
\end{equation*}
where:
\begin{itemize}
\item 
	$\mathcal{F} = \{p_1,p_2,\ldots,p_{n_p}\}$ is the set of all predicates.
\item
	$B \subset \mathcal{F}$ is the total atomic Beliefs set.
\item
	$B_0$ is the Initial Beliefs set.
\item
	$L = \{l_1,l_2,\ldots\,l_{n_l}\}$ is a set of logic-based implication rules on the predicates of $B$.
\item
	$A = \{a_1,a_2,\ldots,a_{n_a}\} \subset \mathcal{F} \setminus B$ is a set of all available actions. Actions can be either \emph{internal}, when they modify the Beliefs set to generate internal events, or \emph{external}, when they are linked to external functions. Beliefs generated by internal actions are also called `mental notes'.
\item
	$A_0$ is the set of Initial Actions.
\item 
	$\Pi = \{\pi_1,\pi_2,\ldots,\pi_{n_\pi}\}$ is the set of executable plans or \emph{plans library}. Each plan $j$ is a sequence $\pi_j(\lambda_j)$, with $\lambda_j \in [0,n_{\lambda_j}]$ being the \emph{plan index}, where $\pi(0)$ is a logic statement called \emph{triggering condition}, and $\pi_j(\lambda_j)$ with $\lambda_j>0$ is an action from $A$.	
\end{itemize}
\end{definition}

During an execution the agent also uses the following dynamic subsets of the sets defined above:

\begin{itemize}
\item
	$B[t]\subset B$ is the \emph{Current Beliefs} set, the set of all beliefs available at time $t$. Beliefs can be negated with a `\texttildelow' symbol.
\item
	$E[t] \subset B$ is the \emph{Current Events} set, which contains events available at time $t$. An \emph{event} is a belief paired with either a `$+$' or a `$-$' operator to indicate that the belief is either added or removed.
\item
	$D[t] \subset \Pi$ is the \emph{Applicable Plans} or \emph{Desires} set at time $t$, which contains plans triggered by current events.
\item
	$I[t] \subset \Pi$ is the \emph{Intentions} set, which contains plans that the agent is committed to execute.
\end{itemize} 

The \emph{triggering condition} of each plan of the plan library is composed by two parts: a \emph{triggering event} and a \emph{context}, a logic condition to be verified for the plan to apply.
We write $B[t]\vDash c$ when the Current Beliefs set ``satisfies'' the expression $c$, or in other words when the conditions expressed by $c$ are true at time $t$. Note that in all our definitions and throughout the paper, time $t\in\mathbb{N}_{\geq 1}$ refers to the integer count of reasoning cycles. 

Although different \gls{aop} languages implement the agent in different ways, generally speaking an agent program is iterative. Each iteration is called \emph{reasoning cycle}. The reasoning cycle of the \gls{lisa} system is explained in Sec. \ref{sec:rescycle}.



\subsection{Model checking and verification}
\label{sec:modelcheck}

Probabilistic model checking is an automated verification method that aims to verify the correctness of probabilistic systems, by establishing if a desired property holds in a probabilistic model of the system \cite{kwiatkowska2010}. 
For the purpose of this work we will consider models in particular: \gls{dtmc} and \gls{mdp}. Referring to \cite{forejt2011,kwiatkowska2007,kwiatkowska2010} we give the following definitions:

\begin{definition} [\acrfull{dtmc}]
	A (labelled) \gls{dtmc} is a tuple $\mathcal{D}=\left( S,s_0,\vec{P},L \right)$, where $S$ is a countable set of states, $s_0 \in S$ is the initial state, $\vec{P}: S \times S \rightarrow [0,1]$ is a \emph{Transition Probability Matrix} such that $\sum_{s^\prime \in S} \vec{P}(s,s^\prime)=1$ and $L: S \rightarrow \wp(\mathcal{F})$ is a labelling function that assigns to each state a set of atomic prepositions that are valid in the state. 
\label{def:dtmc}
\end{definition}

\begin{definition}[\acrfull{mdp}]
	A (labelled) \gls{mdp} is a tuple $\mathcal{M}=\left( S,s_0,C,Step,L \right)$, where $S$ is a countable set of states, $s_0 \in S$ is the initial state, $C$ is an alphabet of choices with $C(s^\prime)$ being the set of choices available in any state $s^\prime$, $Step: S\times C \rightarrow \dist(S)$ is a probabilistic transition function with $\dist(S)$ being the set of all probability distributions over $S$ and $L: S \rightarrow \wp(\mathcal{F})$ is a labelling function that assigns to each state a set of atomic prepositions that are valid in the state.
\label{def:mdp}
\end{definition}

Detailed explanation on the techniques used to perform model checking on probabilistic models goes beyond the scope of this paper. However we report here the syntax of the language used to write properties to verify with model checkers, which is called \gls{pctl} \cite{hansson1994}.
\begin{definition}[Syntax of \acrshort{pctl}]
	\begin{equation*}
	\begin{array}{rl}
		\phi ::= & \mathtt{true} \;|\; a \;|\; \phi \wedge \phi \;|\; \neg \phi \;|\; \mathtt{P}_{\bowtie p}[\psi]\\
		\psi ::= & \mathtt{X}\: \phi \;|\; \phi\: \mathtt{U}^{\leq k}\: \phi \\
	\end{array}
	\end{equation*}
	where $a$ is an atomic proposition, $\bowtie \in \{\leq,<,>,\geq\}$ and $p\in [0,1]$.
\end{definition}
We also allow the usual abbreviations such as `$\mathtt{F} \phi$' (equivalent to `$\mathtt{true}\: \mathtt{U}\: \phi$'). A commonly used extension of \gls{pctl} is the addition of \emph{quantitative} versions of the $\mathtt{P}$ operator. For example $\mathtt{P}_{=?}[\psi]$ asks: ``what is the probability of $\psi$ holding?''. In the same way we can add the operators $\mathtt{P_{min=?}}[\psi]$ and $\mathtt{P_{max=?}}[\psi]$ for \gls{mdp} models that ask: ``what is the minimum/maximum probability of $\psi$ holding?''.

\gls{pctl} formulas can be extended with \emph{reward} properties \cite{kwiatkowska2007} by the addition of the \emph{reward operator} $\mathtt{R}_{\bowtie r}[\cdot]$ and the following state formulas:
\begin{equation}
\label{eq:rewards}
	\mathtt{R}_{\bowtie r}[\mathtt{C}^{\leq k}]\; |\; \mathtt{R}_{\bowtie r}[\mathtt{F}\: \phi]
\end{equation}
where $\mathtt{C}$ is the cumulative reward operator, $r \in \mathbb{R}$, $k\in \mathbb{N}$ and $\phi$ is a \gls{pctl} state formula.


\section{The Limited Instruction Set Agent}
\label{sec:rescycle}
The architecture of \gls{lisa}, depicted in Fig. \ref{fig:architecture}, is based on the three-layer architecture \cite{gat1998}. Each block with rounded corners is a collection of so called \emph{skills} that the agent reasoning is able to execute when invoking actions. Note the hybrid nature of the system: the dotted lines represent \emph{symbolic} flows of information, while the solid line represent \emph{numeric} information.

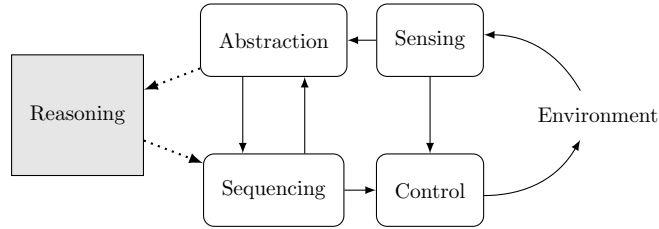
\begin{figure}[htbp]
\centering

	\begin{tikzpicture}[auto,node distance=6mm,>=latex,font=\small,scale=0.8, every node/.style={transform shape}]

\tikzstyle{rect}=[draw=black,inner sep=3mm,fill=white!90!black,text centered]
\tikzstyle{diamond}=[draw=black,minimum height=12mm,minimum width=12mm,shape=diamond]
\tikzstyle{rc}=[draw=black,rounded corners,inner sep=3mm,text centered,minimum height=12mm]
\tikzstyle{round}=[draw=black,inner sep=0mm,text width=10mm, text centered,shape=circle]

\node (reasoning) [rect,minimum height=20mm] {Reasoning};
\node (ref) [right=20mm of reasoning] {};
\node (abstraction) [rc,above=5mm of ref] {Abstraction};
\node (sequencing) [rc,below=5mm of ref] {Sequencing};
\node (ref) [right=23mm of ref] {};
\node (sensing) at (abstraction -| ref) [rc] {Sensing};
\node (control) at (sequencing -| ref) [rc] {Control};
\node (env) [right=15mm of ref,minimum height=8mm] {Environment};

\draw[->,thick,dotted] (reasoning) -- (sequencing);
\draw[->] (sequencing.50) -- (sequencing.50 |- abstraction.south);
\draw[<-] (sequencing.130) -- (sequencing.130 |- abstraction.south);
\draw[->] (sequencing) -- (control);
\draw[->] (control) to [bend right] (env);
\draw[->] (env) to [bend right] (sensing);
\draw[->] (sensing) -- (control);
\draw[->] (sensing) -- (abstraction);
\draw[->,thick,dotted] (abstraction) -- (reasoning);

\end{tikzpicture}
	\caption{The \gls{lisa} architecture}
	\label{fig:architecture}
\end{figure}

The agent program is an evolution of Jason \cite{bordini2007,jasonmanual}. Here follows a brief overview of the modification that were made to Jason.

\emph{Perception}. In \gls{lisa} perception predicates can be of two types: \emph{sensory perception} ($p\in B_s$) and \emph{action feedbacks} ($p\in B_a$), therefore the Beliefs set is defined as: 
	\begin{equation}
		B=\left\{ B_s,B_a,B_m \right\}
	\end{equation}
	where $B_m$ is the set of all possible mental notes. The \emph{action feedbacks} are percepts that actions feedback to the Beliefs set of the agent in order to make the agent aware of the outcome of the action itself, i.e. success, partial success or failure. 
For the purpose of modelling, this classification is very important: the different nature of sensory percepts and action feedbacks needs to be modelled in a different way to accurately describe the behaviour of the environment. Messages are also handled as percepts.

\emph{Goals}. In Jason there is a distinction between beliefs and goals. In a practical sense this distinction does not have a great influence: beliefs and goals can both trigger plans. For this reason in \gls{lisa} we drop the definition of goals, by also defining goals as beliefs. This can simplify the process of generating a model directly from the agent code, by simplifying the syntax, and it also simplifies the modelling of the belief update process, by reducing the number of states required to describe it.

\emph{Logic rules}. In Jason logic-based implication rules are present but yet not well implemented, to the point that the main text itself \cite{bordini2007} advises against their use. In \gls{lisa} we allow for rule to change the Beliefs set and therefore generate events. This feature potentially reduces the state space by allowing the definition of shorter plans with less actions.

In Fig. \ref{fig:lisacycle} we describe the reasoning cycle for \gls{lisa}. 
The first step is to update the Current Beliefs set with the Beliefs Update Function ($f_{BU}$), based on percepts, messages and mental notes, where logic rules are also applied. The Belief Review Function ($f_{BR}$) then checks what changes have been made to the Current Beliefs set and it generates the new Events set. The function $f_P$ gathers all the plans from the Plan Library that are triggered by the current events, if the plan context is applicable to the Current Beliefs set, the plan is copied to the Desires set. An external function called \emph{Plan Selection Function} ($F_O$) selects one plan for each event and it copies it from the Desires set to the Intentions set. Finally for every cycle the function $f_{act}$ executes the next action for each plan. 

\begin{figure}[htbp]
	\centering

\begin{tikzpicture}[auto,node distance=6mm,>=latex,font=\small,scale=0.8, every node/.style={transform shape}]

\tikzstyle{rect}=[draw=black,inner sep=3mm,fill=white!90!black,text width=10mm,text centered]
\tikzstyle{rect2}=[draw=black,inner sep=3mm,text width=10mm,text centered]
\tikzstyle{diamond}=[draw=black,minimum height=13mm,minimum width=13mm,shape=diamond]
\tikzstyle{rc}=[draw=black,rounded corners,inner sep=3.5mm,text centered]
\tikzstyle{round}=[draw=black,inner sep=0mm,text width=10mm, text centered,shape=circle]

	\node (msg) at (0,0) [] {Messages};
	\node (percepts) [above=3mm of msg] {Percepts};
	\node (actfeed) [below=3mm of msg,text width=14mm,text centered,inner sep=0] {Action Feedbacks};
	\node (fbu) [rc,right=10mm of msg] {$f_{BU}$};
	\node (rules) [rect2,above=8mm of fbu] {Logic rules};
	\node (fbr) [rc,right=10mm of fbu] {$f_{BR}$};
	\node (bb) at (rules -| fbr) [rect] {Current Beliefs};
	\node (fp) [rc,right=10mm of fbr] {$f_P$};
	\node (events) at (bb.east -| fp)[rect] {Events};
	\node (des) [rect,below left=6mm and 0 of fp] {Desires};
	\node (Fo) [diamond,right= of des] {$F_O$};
	\node (plans) [rect2,right=5mm of events] {Plan Library};
	\node (inten) [rect,below=of des,minimum height=20mm,minimum width=40mm,text width=20mm] {};
	\node (inten2) [above left=-5mm and -18mm of inten]{Intentions};
	\node (plan1) [fill=white,draw=black,minimum height=12mm,minimum width=8mm,below left=-15mm and -13mm of inten]{$\pi_1$};
	\node (plan2) [fill=white,draw=black,minimum height=10mm,minimum width=8mm,above right=-10mm and 1mm of plan1]{$\pi_2$};
	\node (plan3) [fill=white,draw=black,minimum height=12mm,minimum width=8mm,above right=-12mm and 1mm of plan2]{$\pi_3$};
	\node (act) [minimum height=10mm,rc,right= of inten] {$f_{act}$};
	
	\node () [below left=0 and 0 of fbu.north east] {\scriptsize 1};
	\node () [below left=0 and 0 of fbr.north east] {\scriptsize 2};
	\node () [below left=0 and 0 of fp.north east] {\scriptsize 3};
	\node () [below=0 of Fo.north] {\scriptsize 4};
	\node () [below left=0 and 0 of act.north east] {\scriptsize 5};
	
	\draw[->] (percepts) -- (fbu);
	\draw[->] (msg) -- (fbu);
	\draw[->] (actfeed) -- (fbu);
	\draw[->] (rules) -- (fbu);
	\draw[<->] (bb) -- (fbu.30);
	\draw[->] (bb) -- (fbr);
	\draw[->,name path=line1] (fbr.30) -- (events);
	\draw[->] (events) -- (fp);
	
	\path[name path=line2] (bb) -- (fp.145);
	\path[name intersections={of= line1 and line2}];
	\coordinate(ref) at (intersection-1);
	\path[name path=circ](ref) circle(2pt);
	\path[name intersections={of=circ and line2}];
	\coordinate(ref1) at (intersection-1);
	\coordinate(ref2) at (intersection-2);
	\draw[] (bb) -- (ref1);
	\draw[->] (ref2) -- (fp.145);
	
	\draw[->] (fp) -- (des);
	\draw[->] (plans) -- (fp.30);
	\draw[->] (des) -- (Fo);
	\draw[->] (Fo) -- (plan3.80);
	\draw[->] (Fo) -- (plan2.80);
	\draw[->] (inten) -- (act); 
	\draw[->] (act.15) -- +(5mm,0);
	\draw[->] (act.east) -- +(5mm,0);
	\draw[->] (act.-15) -- +(5mm,0);
	\draw[->] (act) -- +(0,-15mm) -| (inten.south);
	\draw[->] (act) -- +(0,-15mm) -| (fbu);

\end{tikzpicture}
	\caption{The \gls{lisa} reasoning cycle, rounded blocks represent internal functions, white square blocks are static sets, grey blocks are dynamic sets}
	\label{fig:lisacycle}
\end{figure}
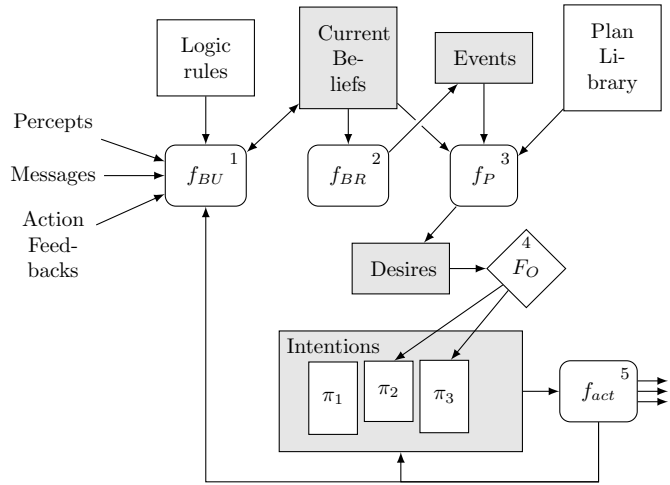

The general flow is similar to that of Jason with mainly one distinction: in every reasoning cycle the Jason agent only allows for the handling of a single event (selected with a function called \emph{Event Selection Function} $F_E$), and then the execution of a single action from the Intentions set (selected with a function called \emph{Intention Selection Function} $F_I$). In \gls{lisa} we implement a multi-threaded work flow that allows the handling of multiple events, and then the execution of multiple actions at the same time. This implies that the \emph{Desires} set becomes:
\begin{equation}
	D[t]=\left\{ D_1[t], \ldots, D_{n_e}[t] \right\}
\end{equation}
where each $D_j[t]$ is the set of plans triggered by an event $e_j\in E[t]$ and $n_e=|E[t]|$ is the number of events. Consequently, the function $F_O$, must be applied to every $D_j[t] \subset D[t]$. It is important to note that plans are \emph{copied} into the Desires set from the Plan library, but not exclusively, which implies that different subsets of $D[t]$ may have a copy of the same plan. However if a plan is selected multiple times in the same reasoning cycle, it will only be executed once. Furthermore once a plan is selected from the Desires set and copied to the Intentions set for execution, if the plan is selected again in the future it will not be executed a second time, but it will carry on from the current state unless a plan interruption action is issued.

This multi-threaded implementation greatly simplifies the modelling process of the agent reasoning by drastically reducing the number of states required to describe it. By eliminating the need for specialised non-deterministic functions the model does not have to keep track of the events and actions activated in previous reasoning cycles therefore reducing the number of states. This also reduces the level of non-determinism in the agent reasoning, which then allows for a more precise generalisation of the abstraction process and in turn the application of an automatic modelling software that generates a complete and verifiable model directly from the agent code.



\section{Abstraction to discrete finite-state machine}
\label{sec:abstraction}

In this section we give a detailed description of the abstraction of the LISA reasoning to two kinds of discrete state machines: \gls{dtmc} and \gls{mdp} (see Definitions \ref{def:dtmc} and \ref{def:mdp}).

The agent defined in Definition \ref{def:agent} is in principle a deterministic system with well defined rules and states. In the Jason implementation however, there are three functions that introduce non-determinism in the reasoning cycle ($F_E$, $F_O$ and $F_I$), which we reduce to one ($F_O$) with our \gls{lisa} implementation. In Theorem \ref{th:dtmc} we show that the \gls{lisa} system can still be modelled as a \gls{dtmc} under the right conditions, and in Theorem \ref{th:mdp} we show that the \gls{lisa} system can always be modelled as a \gls{mdp}.

In Definition \ref{def:agent} we introduced the concept of \emph{plan} as a sequence $\pi = \left\{ \pi(0), \pi(1), \ldots, \pi(n_\lambda) \right\}$. Assuming that a plan is not allowed to be executed multiple times in parallel, let us define a set of \emph{plan indices} $\vec{\lambda}[t] = \left\{ \lambda_1, \lambda_2, \ldots, \lambda_{n_\pi} \right\}$, which represents the state of all plans at time $t$. Note that, according to this definition, a plan $\pi_j$ is a member of the Intentions set $I[t]$ at time $t$ if and only if $\lambda_j>0$ at time $t$. From $\vec{\lambda}[t]$ we can define a set of all possible indices as $\vec{\Lambda} = \left\{ \vec{\Lambda}_1, \vec{\Lambda}_2, \ldots, \vec{\Lambda}_{n_\pi} \right\}$, where $\vec{\Lambda}_j = \{1,\ldots,n_{\lambda_j}\}$ is the set of natural numbers between $1$ and the total number $n_{\lambda_j}$ of actions for each plan $\pi_j$.

\begin{theorem}[\gls{lisa} abstraction to \gls{dtmc}]
\label{th:dtmc} 
	Assuming the existence of sets of (discrete) probability distributions $\dist(B_s)$ and $\dist(B_a)$, over the set of percepts and the set of action feedbacks, if $\;\forall\: i,j\: \in\: [1,n_\pi],\; \pi_i(0) \neq \pi_j(0)$	the \gls{lisa} can be modelled as a \gls{dtmc} .
\end{theorem}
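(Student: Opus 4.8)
The plan is to build an explicit \gls{dtmc} $\mathcal{D}=(S,s_0,\vec{P},L)$ whose runs correspond exactly to the executions of the \gls{lisa} reasoning cycle, and then to verify it against Definition~\ref{def:dtmc}. The first step is to fix the state space. Inspecting the reasoning cycle of Fig.~\ref{fig:lisacycle}, the only information carried from one cycle to the next is the Current Beliefs set $B[t]\subset B$ together with the vector of plan indices $\vec{\lambda}[t]$ (which, by the remark just before the theorem, also encodes the Intentions set, since $\pi_j\in I[t]$ exactly when $\lambda_j>0$); the sets $E[t]$ and $D[t]$ are recomputed from scratch every cycle. I would therefore take $S$ to be the set of pairs $s=(B[t],\vec{\lambda}[t])$ --- enlarged by whatever small amount of bookkeeping $f_{BR}$ needs in order to detect which literals were added or removed --- which is finite because $\mathcal{F}$ is finite and each $\vec{\Lambda}_j$ is finite. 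The initial state $s_0$ is read off from $B_0$ and $A_0$, and $L(s)$ assigns to $s$ the set of predicates of $\mathcal{F}$ that hold in it (the beliefs of $B[t]$, plus atoms recording which plans and actions are currently active), so that \gls{pctl} properties can be interpreted.

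The second step is to show that a single reasoning cycle is a \emph{deterministic} function of the current state and one fresh random input. Let $\omega=(\omega_s,\omega_a)\in B_s\times B_a$ be the percepts and action feedbacks delivered at the start of the cycle; these are governed by the hypothesised distributions from $\dist(B_s)$ and $\dist(B_a)$, which we may take as a family indexed by $s$ (the environment's response generally depends on the agent's state and the actions it last issued --- this is what ``sets of distributions'' buys us). Walking through the five steps, $f_{BU}$, $f_{BR}$, $f_P$ and $f_{act}$ are deterministic by construction, so the only possible source of non-determinism is the Plan Selection Function $F_O$ in step~4. This is exactly where the hypothesis is used: if $\pi_i(0)\neq\pi_j(0)$ for all $i,j$, the triggering conditions are pairwise distinct, so each event $e_j\in E[t]$ triggers at most one plan; hence every $D_j[t]$ is a singleton or empty, and $F_O$ --- which selects one plan per event --- has no genuine choice to make. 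The cycle is thus a well-defined map $s'=\Phi(s,\omega)$.

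The final step assembles the transition matrix by pushing the input distribution forward through $\Phi$:
\[
\vec{P}(s,s')=\sum_{\omega\,:\,\Phi(s,\omega)=s'}\mu_s(\omega),
\]
where $\mu_s$ is the joint distribution over $B_s\times B_a$ induced by the chosen elements of $\dist(B_s)$ and $\dist(B_a)$. Since $\mu_s$ is a probability distribution and the preimages $\Phi(s,\cdot)^{-1}(s')$ partition its support as $s'$ ranges over $S$, we get $\sum_{s'\in S}\vec{P}(s,s')=1$, and $\mathcal{D}=(S,s_0,\vec{P},L)$ meets Definition~\ref{def:dtmc}.

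I expect the main obstacle to be the second step: pinning the state down so that the Markov property genuinely holds, i.e.\ so that $s'$ depends on $s$ and $\omega$ only and not on the earlier history. The delicate points are (i) that $f_{BR}$ must know which beliefs changed, which forces the extra bookkeeping into $S$; (ii) the convention from Sec.~\ref{sec:rescycle} that a plan already in the Intentions set is not restarted when triggered again, which must be faithfully reflected in how $\vec{\lambda}[t]$ is updated; and (iii) checking that the distinctness hypothesis still collapses $F_O$ to a function in the multi-threaded setting where several events fire in the same cycle --- here one additionally invokes the paper's convention that a plan triggered several times in one cycle is executed only once, so the (possibly repeated) singleton choices across the $D_j[t]$ still determine a unique outcome.
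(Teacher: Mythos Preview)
Your proposal is correct and follows essentially the same approach as the paper: identify a finite state from the beliefs and plan indices, argue that the distinct-triggering-conditions hypothesis makes $F_O$ a trivial function so the only randomness is that supplied by $\dist(B_s)$ and $\dist(B_a)$, and conclude that the resulting transition kernel defines a \gls{dtmc}. The only cosmetic difference is that the paper includes $E[t]$ explicitly as a state component (taking $S=\wp(B)\times\wp(B)\times\vec{\Lambda}$), which is exactly your ``bookkeeping for $f_{BR}$'', and the paper does not spell out the pushforward construction of $\vec{P}$ as carefully as you do.
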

\begin{proof}
	A \gls{dtmc} is completely characterised given a countable set of states $S$ and a transition function $\vec{P}:\: S\times S \rightarrow\: [0,1]$. According to the definition of \gls{lisa}, for a reasoning cycle to be completed the agent needs to be aware of $E[t]$ in order to recall plans from the plan library, $B[t]$ in order to check the plans context, and the state of the plans in $I[t]$ in order to execute the next actions. The state of a \gls{lisa} is only relevant at the end of a reasoning cycle, therefore a generic state can be expressed as $s[t]=\left\{ B[t],E[t],\vec{\lambda}[t] \right\}$.
	The state space, given by $S= \wp(B) \times \wp(B) \times \vec{\Lambda}$, is therefore finite and countable. The state of the agent is initialised by $s_0=\{B_0,\emptyset,\vec{0}\}$ and by triggering the actions listed in the initial actions set $A_0$.\\
	The transition function describes the way in which the state changes at every step. For each reasoning cycle, events can be generated from change in beliefs, namely \emph{mental notes}, \emph{action feedbacks} and \emph{percepts}. Changes in mental notes are given by internal actions, which are known from the plan indices $\lambda$. Changes in action feedbacks and percept are given by known probability distributions. If $\;\forall\: i,j,\; \pi_i(0) \neq \pi_j(0)$, e.g. if all plans have different triggering conditions, then 
	\begin{equation}
		\forall t\in\mathbb{N}_{\geq 1} \;,\;\; \left| \bigcup_{k=1}^{n_e} D_k[t] \right| =  \left| D[t] \right| \leq \left| E[t] \right| 
	\end{equation}
	each event will trigger at most one plan, therefore $F_O$ becomes a trivial one-to-one mapping, therefore the system does not show any non-deterministic behaviour, hence the \gls{lisa} reasoning can be modelled as a \gls{dtmc}.
\end{proof}

\begin{theorem}[\gls{lisa} abstraction to \gls{mdp}]
\label{th:mdp}
	Assuming the existence of sets of (discrete) probability distributions $\dist(B_s)$ and $\dist(B_a)$, over the set of percepts and the set of action feedbacks, the any \gls{lisa} reasoning can be modelled as a \gls{mdp}.
\end{theorem}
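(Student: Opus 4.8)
The plan is to construct the \gls{mdp} tuple $\mathcal{M}=(S,s_0,C,Step,L)$ by reusing almost verbatim the state-space construction from the proof of Theorem~\ref{th:dtmc} and then isolating the single remaining source of non-determinism, the Plan Selection Function $F_O$, inside the choice alphabet $C$. As before, the state of a \gls{lisa} at the end of a reasoning cycle is $s[t]=\{B[t],E[t],\vec{\lambda}[t]\}$, so the state space $S=\wp(B)\times\wp(B)\times\vec{\Lambda}$ is finite and countable; the initial state is $s_0=\{B_0,\emptyset,\vec{0}\}$ together with the triggering of the actions in $A_0$; and the labelling $L$ maps each state to the set of beliefs and events it contains. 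None of this uses the hypothesis $\pi_i(0)\neq\pi_j(0)$, so it is available unconditionally.

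Next I would define the choices. When the hypothesis of Theorem~\ref{th:dtmc} fails, a single event $e_k\in E[t]$ may trigger several applicable plans, so each $D_k[t]$ need not be a singleton and $F_O$ genuinely has to pick one plan, or none, per event. Accordingly I would take a choice to be a map assigning to each current event one of its applicable plans, i.e.\ an element of $\prod_{k=1}^{n_e}\bigl(D_k[t]\cup\{\bot\}\bigr)$, and let $C$ be the (finite) set of all such partial maps from $\wp(B)$ to $\Pi$, with $C(s)$ the subset consistent with the events and applicable plans encoded in $s$; by convention $C(s)$ contains a single trivial choice when $E[t]=\emptyset$ or when no plan applies, so that $C(s)\neq\emptyset$ always. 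When every $D_k[t]$ is a singleton — in particular under the Theorem~\ref{th:dtmc} hypothesis — each $C(s)$ collapses to one element and $\mathcal{M}$ reduces to the \gls{dtmc} of Theorem~\ref{th:dtmc}, the consistency check one wants.

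Finally I would define $Step:S\times C\rightarrow\dist(S)$. Fix a state $s$ and a choice $c\in C(s)$. The argument is that, once $c$ is fixed, the only stochastic inputs to the reasoning cycle are the fresh percepts and action feedbacks, which by hypothesis are governed by known distributions over $B_s$ and $B_a$ taken from $\dist(B_s)$ and $\dist(B_a)$; everything else is deterministic. Indeed, $f_{BU}$ — including the application of the logic rules — updates the mental notes as a function of the actions selected by $c$ and the plan indices $\vec{\lambda}[t]$; $f_{BR}$ derives $E[t+1]$ deterministically from the belief change; duplicate selections of a plan and already-running plans are resolved deterministically (execute once, or carry on from the current index unless interrupted); and $f_{act}$ advances each $\lambda_j$ deterministically. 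Hence $Step(s,c)$ is the pushforward of the product of the percept and action-feedback distributions under this deterministic update, which is a genuine probability distribution over $S$, since those factor distributions sum to one and the update is a well-defined function of its arguments. This gives a well-defined \gls{mdp}.

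The step I expect to be the main obstacle is this last one: one has to argue carefully that $F_O$ really is the \emph{only} residual non-determinism in the \gls{lisa} cycle — this is exactly where the earlier structural simplifications are used, namely the elimination of $F_E$ and $F_I$, the multi-threaded handling of all pending events and actions in a single cycle, and treating goals as beliefs — and that, after conditioning on $c$, the update is deterministic in the internal part of the state and measurable in the exogenous draws, so that the pushforward is well defined and normalised. Some additional care is needed to keep the choice alphabet independent of $t$ (which holds because $\Pi$ and $B$ are finite and fixed) and to ensure $Step$ is total by equipping degenerate states with a trivial choice as above.
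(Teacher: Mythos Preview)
Your proposal is correct and follows essentially the same route as the paper: reuse the state space $S=\wp(B)\times\wp(B)\times\vec{\Lambda}$ and initial state from Theorem~\ref{th:dtmc}, identify $F_O$ as the sole source of non-determinism so that the choices in a state are given by the current Desires set, and then observe that once a choice is fixed the remaining transition is governed by the assumed distributions over $B_s$ and $B_a$. Your formalisation of a choice as an element of $\prod_{k}(D_k[t]\cup\{\bot\})$ and of $Step(s,c)$ as a pushforward is more explicit than the paper's, which simply writes $C(s')=D[t']$, but the underlying argument is the same.
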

\begin{proof}
	A \gls{mdp} is completely described given a countable set of states $S$ and a transition function $Step: S \times C \rightarrow \dist(S)$, with $C(s^\prime)$ being the set of available choices in any state $s^\prime$. The set of states can be built as shown in Theorem \ref{th:dtmc}. \\
	If $\;\forall\: i,j\: \in\: [1,n_\pi],\; \pi_i(0) \neq \pi_j(0)$, according to Theorem \ref{th:dtmc}, the system does not show any non-determinism. However, if $\exists i,j \in [1,n_\pi]: \pi_i(0) = \pi_j(0)$, then 
	\begin{equation}
		\exists t^\prime\in\mathbb{N}_{\geq 1} :\: \left| \bigcup_{k=1}^{n_e} D_k[t^\prime] \right| > \left| E[t^\prime] \right|
	\end{equation}
	the number of applicable plans is greater than the number of events, therefore for some event $e_k[t^\prime]$ ($k\in[1,n_e]$), the application of the Plan Selection Function ($F_O(D_k[t^\prime])=\pi$) involves a non-deterministic choice that implies different future probabilistic outcomes from action feedbacks, which prevents the modelling with \gls{dtmc}. However this choice represents the only non-deterministic part of the agent, thus $C(s^\prime)=D[t^\prime]$. Once a choice is made by the Plan Selection Function, the transitions can be defined by changes in beliefs, given by internal actions and known probability distributions as shown in Theorem \ref{th:dtmc}, and therefore the \gls{lisa} modelling as a \gls{mdp} is complete.
\end{proof}

Probabilistic models such as \glspl{dtmc} and \glspl{mdp} can be verified by means of probabilistic model checking, by using dedicated software such as \textsc{Prism}. Theorems \ref{th:dtmc} and \ref{th:mdp} therefore imply that \gls{lisa} can be verified, assuming that probability distributions of the percepts and action feedbacks are well defined. Theorems \ref{th:dtmc} and \ref{th:mdp} also imply the availability of two options when designing the agent program: to design an agent with all unique triggering conditions, so to possibly improve model checking speed but requiring more effort from the designer, or design an agent with matching triggering conditions so to simplify the design but requiring more computation for the model checking.


\section{Probabilistic modelling within agent programs}
\label{sec:modelling}

In this section we describe the process of modelling probabilistic behaviour of the environment and the action feedbacks in the agent code. The aim is to use a unified approach that allows to obtain a complete model of the agent and its interactions with the environment from a single document. The reasoning of the agent is implemented in sEnglish \cite{lincoln2013,veres2008}, to which we add a few features that give the programmer the option of defining the probabilistic parts of the system. Along with the probabilistic modelling we also introduce a \emph{reward} structure which allows to define and use the reward properties supported by \textsc{Prism}.

The action feedbacks are modelled within the action definition of sEnglish by defining the following three parameters: a \emph{probability value} $p$, the \emph{average number of reasoning cycles} $\mu$ in which the action feedback is expected to become true, and a \emph{variance} $\sigma$. In this way we can simulate a time-delay-uncertain phenomenon without the need for real time models.
For the percept process we use a similar notation with the possibility of defining probability distributions that are conditional to other beliefs. In particular the user defines: a \emph{list of percepts or mental notes} to which the percept being modelled is conditioned to, probability, average number of reasoning cycles and variance of \emph{activation} and \emph{deactivation}.


The last feature we introduce is the possibility for the programmer to describe \emph{reward} structures, that then allow to use reward properties as described in Equation \ref{eq:rewards}. The reward values can be declared by adding a new `\texttt{\{$\cdots$\}}' structure to any percept declaration within the Percept Process section, or to any action within any of the executable plans. 

By specifying all the necessary information, as described above, the designer is able to implement a complete model that includes a probabilistic description of the environment behaviour, e.g. percepts and action feedbacks. This allows to automatically generate \textsc{Prism} input code for verification (see Sec. \ref{sec:translation}).


\section{Design-time verification}
\label{sec:translation}

The software used to perform the design-time and run-time verifications is \textsc{Prism} \cite{kwiatkowska2011,prismwebsite}.
The modelling approach showed in Sec. \ref{sec:modelling} gives an sEnglish program that provides enough information to generate a complete \textsc{Prism} model for verification. The \textsc{Prism} model is generated here with a dedicated Matlab script. The translator only operates on the agent program itself, and it runs in the order of the tens of milliseconds on the laptop PC we used for the testing. For this reason the performances of the translator itself will be considered to be negligible for the results presented in this paper.

The automatically generated \textsc{Prism} model is structured as follows: a variable is defined for every belief (percept, mental note and action feedback), a variable is also defined for every plan, representing the plan index $\lambda$ which captures the state of the plan at any given time.
By using the synchronisation feature offered by the \textsc{Prism} software, the reasoning cycle is simulated in two steps: a \emph{Beliefs set update}, where variables associated with beliefs are updated, and a \emph{plan index update}, where variables associated to plan indexes are updated according to the beliefs.
With this method we ensure that plans only advance when the appropriate conditions on the Beliefs set are met. Note that there are no variables associated with actions as they are not part of the definition of state of the agent, as shown in Theorem \ref{th:dtmc}. 

Note that by using the approach presented in this paper, during the verification process, the user has access to every single belief and plan. This means that the property specification can touch any part of the system, allowing the user to define arbitrarily complex properties on any aspect of the reasoning process. This can be used to drastically reduce the design errors for autonomous agents.

For example, assume that an agent is implemented to have two opposite actions such as `\texttt{go left}' and `\texttt{go right}'. Assuming that the agent is programmed to have $\pi_2(1)$=`\texttt{go left}' and $\pi_4(2)$=`\texttt{go right}', the property:
\begin{itemize}
\item[]
	$\mathtt{P_{max=?}\; [F\;\; (plan\_2=1\; \&\; plan\_4=2)]}$ 
\end{itemize}\vspace{1mm}
will ask the model checker to generate ``the maximum probability of `\texttt{go left}' and `\texttt{go right}' to be executed at the same time at some point in the future''.


\section{Run-time verification for improved decision-making}
\label{sec:runtime}

In this section we propose two different methods for using a run-time verification process as an internal model for improving the decision-making capabilities for the \gls{lisa} system. 
%
The automatically generated \textsc{Prism}, presented in Sec. \ref{sec:translation}, can also be used for run-time verification. Most of the computational power required to verify such a model is usually spent by the model checker when building the model itself, which does not influence the verification time. In other words, once the model is built, the user can run different verification queries without having to rebuild the model. In many cases, PRISM is able to compute the answers to those queries in a matter of seconds, even for a fairly complex model, therefore this can be a reasonable technique to use in this framework.

The first method is to implement the run-time verification process as a \emph{skill} of the agent, e.g. as a module of the full system.
The \gls{dtmc} or \gls{mdp} model is verified against a set of predefined queries. In particular, in \textsc{Prism}, it is possible to check a query by selecting a starting state with the use of \emph{filters} \cite{prismwebsite}. The run-time verification is then used to generate a set of results that will be interpreted by a `\emph{generate beliefs}' function that will activate or deactivate certain beliefs in the agent Beliefs set.

%
%

The second method consists of implementing a \emph{Plan Selection Function} that makes use of model checking to assess probability of success based on user-defined specifications, and selects the most suitable plan. A clear advantage to this approach is that, since the probabilistic model is generated automatically, the user does not need to implement a specialised function for each agent.
A possible implementation for this is as follows. The function takes as input the Current Beliefs and Desires set. The model generated at design time can be initialised with the current state and then checked against predefined queries. This results in probability values that can be used as indices to select the most likely to succeed plan amongst the ones in the Desires set.

%
%


Note that the two methods described here for run-time model checking, are not mutually exclusive: in case the programmer chooses to implement the \gls{lisa} as a \gls{mdp}, they could both be used at the same time.


\section{A case study}
\label{sec:casestudy}

Consider an \gls{asv} designed for mine detection and disposal. The \gls{asv} is equipped with sensing equipment such as sonars and cameras that allow the detection of unidentified objects in the area of interest. These sensors give the vehicle a cone shaped visibility range. Using its pose in the environment and the information from the sensing equipment, the system is able to assess, on the fly, whether or not there has been any area left unclear. All the data collected is continuously sent back to the control centre. Once the mission is started, lower level tasks, such as for example collision avoidance, are carried out automatically by dedicated subsystems. During the exploration of the area, the system tags mine-like objects and logs their positions and available information for the human operators at the control centre to analyse and deliberate. 
In this scenario, a mission consists of a set of points (in terms of latitude and longitude) that outlines a specific area. An algorithm generates a sequence of waypoints connected by linear tracks, the parallel distance between tracks is calculated by considering the range of the available sensing equipment. We will call the linear tracks, and the area surrounding the tracks, ``blocks''.

In a best case scenario the exploration plan will be carried out as it is defined. However a number of problems can occur. In case that the weather condition becomes too harsh, the agent will wait for instructions from human operators. If the agent realises that there are areas left unexplored in the last block, it will make a non-deterministic decision on whether to immediately go back to re-explore missed spots, or keep going and come back at the end of the mission. 

A fragment of the \gls{lisa} program developed for this example is shown in Fig. \ref{fig:agentexample}.
\begin{figure}[htbp]
\centering
	\begin{lstlisting}[language=senglish,frame=single]
PERCEPTION PROCESS
Monitor the following booleans:
//Percepts
Sea state is too high. {[],[0.5,10,0]}
I am at global waypoint.
Areas left unexplored.
Last waypoint reached. {[I am at global waypoint],[1,1,0]}
...
EXECUTABLE PLANS
...
//Plan 5
If ^[Block explored] while ^[Areas left unexplored] and *@\texttildelow @*^[Sea state is too high] then
	[Activate park mode.] 
	[Generate set of waypoints.] 
	+^[Re_exploring areas]
	[Activate drive mode.]. 
...
//Plan 8
If ^[Sea state is too high] while true then
	[Activate park mode.]
	[Wait for instructions.]
	+^[Waiting for instructions]. 	
	\end{lstlisting}
	\caption{Fragment of the agent program for this case study.}
	\label{fig:agentexample}
\end{figure}
In Fig. \ref{fig:prismexample} is shown a fragment of the \textsc{Prism} program that is automatically generated from the agent code.
\begin{figure}[htbp]
\centering
	\begin{lstlisting}[language=prism,frame=single]
module plan_5
plan_5: [0..4] init 0;
[t] plan_5=0 & !(plan_4=0 & block_explored=1 & (areas_left_unexplored=1 & sea_state_is_too_high=0)) -> (plan_5'=0);
[t] plan_5=0 & (plan_4=0 & block_explored=1 & (areas_left_unexplored=1 & sea_state_is_too_high=0)) -> (plan_5'=1);
//activate_park_mode
[t] plan_5=1 & !(park_mode=1) -> (plan_5'=1);
[t] plan_5=1 & (park_mode=1) -> (plan_5'=2);
//generate_set_of_waypoints
...
module wait_for_instructions
continue: [0..5] init 0;
abort: [0..1] init 0;
//continue[0.6,5,0] abort[0.4,5,0] 
[p] !(plan_8=2) & (continue<=1 & abort<=1) -> (continue'=0) & (abort'=0);
[p] (plan_8=2) & (continue<=1 & abort<=1) -> (continue'=5);
	\end{lstlisting}
	
	\caption{Fragment of the agent program for this case study.}
	\label{fig:prismexample}
\end{figure}
In Table \ref{tab:results} results are reported by running the model in \textsc{Prism}. All the testing was done by using an Apple laptop with a dual-core Intel Core i5-4258U 2.4GHz CPU and 16GB of memory running 64-bit Mac OS X 10.11.3. 
We implemented two different versions of agent programs for this case study: one by defining the agent to be abstracted as a \gls{dtmc}, as per Theorem \ref{th:dtmc}, and one as a \gls{mdp}, as per Theorem \ref{th:mdp}. Both of the models feature $10$ executable plans and $3$ logic rules each, $4$ percepts, $4$ possible actions with a total of $5$ action feedbacks, with additional conditions for the \gls{dtmc} version.

Referring to Table \ref{tab:results}, as expected, the \gls{mdp} case generates a model that is larger than the \gls{dtmc} counterpart. However even though the number of states in the \gls{mdp} case is about $70\%$ larger than the number of state for the \gls{dtmc} model, the time required to build the model is very similar. This is possibly due to the way \textsc{Prism} handles the model building: the software constructs a MTBDD \cite{fujita1997} structure, that is very much dependent on the logic structure of the model.

Both models were then ran with a standard verification query that calculates the minimum probability of completing the mission within $100$ steps. Note that for the \gls{dtmc} model the probability is a single value, which is still indicated here with `minimum probability'. The verification time in this case is consistent with the increase in number of states from the \gls{dtmc} model to the \gls{mdp} model.

\begin{table*}
\centering
\renewcommand{\arraystretch}{1.3}
\caption{Verification results for the example.\\ Both models are run with the following query: $\mathtt{P_{min=?}\; [F\leq100\;\; mission\_complete=1]}$.}

\begin{tabular}{|l|r|r|r|r|r|r|r|}	\hline
\textbf{Model} & \textbf{States} & \textbf{Transitions} & \textbf{Choices} & \textbf{Build time} & \textbf{Ver. time} & \textbf{Memory} & \textbf{Result} \\ \hline
\gls{mdp} & $270,268$ & $420,431$ & $276,454$ & $38.061$s & $1.901$s & $10.0$ MB & $0.6357$\\ \hline
\gls{dtmc} & $157,072$ & $231,148$ & N/A & $38.146$s & $2.052$s & $3.5$ MB & $0.6389$ \\
\hline
\end{tabular}

\label{tab:results}
\end{table*}


\section{Conclusions}
In this article we introduced a new architecture for \gls{bdi} agent programming called \acrfull{lisa}. The architecture builds on previous implementations of AgentSpeak but with a simpler structure, in order to facilitate automatic verification of agent reasoning. The reasoning of the agent is defined in the \gls{nlp} language \emph{sEnglish}. We proved that the \gls{lisa} architecture can be abstracted as a Markovian model (\gls{dtmc} or \gls{mdp}), we then showed how to define a full probabilistic model that includes the agent reasoning and the environment, all defined within the agent reasoning program. From the improved agent program we can then automatically generate a full probabilistic model in \textsc{Prism}'s input language for verification.

The model generated from the agent code is used for both design-time and run-time verification. The design-time verification can serve to improve and validate the agent design. The run-time verification can be used to improve the decision-making capabilities of the agent by implementing model-checking techniques as a means of simulation by the agent in order to predict future events and choose the most suitable strategy.

\section*{Acknowledgment}
This work was supported by the EPSRC project EP/J011894/2.


\end{document}